\def\set@curr@file#1{\def\@curr@file{#1}} 
\newtheorem{thm}{Theorem}[section]
\newtheorem{defin}{Definition}[section]
\newtheorem{lem}{Lemma}[section]
\title[ Differentially-private Federated Neural Architecture Search]{ Differentially-private Federated Neural Architecture Search}
\author{\Name{Ishika Singh\textsuperscript{*$\dagger$}}\\
       \addr University of California San Diego 
\AND
\Name{Haoyi Zhou\textsuperscript{*$\dagger$}}
\\
       \addr
Rutgers University 
\AND
\Name{Kunlin Yang}\\
       \addr University of California San Diego
\AND
 \Name{Meng Ding\textsuperscript{$\dagger$}}\\
       \addr University of California San Diego
       \AND
        \Name{Bill Lin}\\
       \addr University of California San Diego
       \AND
       \Name{Pengtao Xie}
       \Email{pengtaoxie2008@gmail.com}\\
       \addr University of California San Diego
\AND
       }
\begin{document}

\maketitle

\begin{abstract}
Neural architecture search, which aims to automatically search for architectures (e.g., convolution, max pooling) of neural networks that maximize validation performance, has achieved remarkable progress recently. In many application scenarios, several parties would like to collaboratively search for a shared neural architecture by leveraging data from all parties. However, due to privacy concerns, no party wants its data to be seen by other parties. To address this problem, we propose federated neural architecture search (FNAS), where different parties collectively search for a differentiable architecture by exchanging gradients of architecture variables without exposing their data to other parties. To further preserve privacy, we study differentially-private FNAS (DP-FNAS), which adds random noise to the gradients of architecture variables. We provide theoretical guarantees of DP-FNAS in achieving differential privacy. Experiments show that DP-FNAS can search highly-performant neural architectures while protecting the privacy of individual parties. The code is available at \url{https://github.com/UCSD-AI4H/DP-FNAS}
\end{abstract}

\section{Introduction}

\let\thefootnote\relax\footnotetext{*Equal contribution}
\let\thefootnote\relax\footnotetext{$\dagger$The work was done during internship at UCSD.}

In many application scenarios, the data owner would like to train machine learning (ML) models using their data that contains sensitive information, but the size of the data is limited. Many ML methods, especially deep learning methods, are data hungry. Having more data for model training usually improves performance. One way to have more training data is to combine data of the same kind from multiple parties and use the combined data to collectively train a model. However, since each of these datasets contains private information, they are not allowed to share across parties. Federated learning~\citep{konevcny2016federated,mcmahan2016communication} is developed to address this problem. Multiple parties collectively train a shared model in a decentralized way by exchanging sufficient statistics (e.g., gradients) without exposing the data of one party to another. 

While preserving privacy by avoiding sharing data among different parties, federated learning (FL) incurs difficulty for model design. When ML experts design the model architecture, they need to thoroughly analyze the properties of data to obtain insights that are crucial in determining which architecture to use. In an FL setting, an expert from one party can only see the data from this party and is not able to analyze the data from other parties. Without having a global picture of all data from different parties, ML experts are not well-equipped to design a model architecture that is optimal for fulfilling the predictive tasks in all parties. To address this problem, we resort to automated neural architecture search~\citep{zoph2016neural,liu2018darts,real2019regularized} by designing search algorithms to automatically find out the optimal architecture that yields the best performance on the validation datasets.

To this end, we study federated neural architecture search (FNAS), where multiple parties collaboratively search for an optimal neural architecture without exchanging sensitive data with each other for the sake of preserving privacy. For computational efficiency, we adopt a differentiable search strategy~\citep{liu2018darts}. The search space is overparameterized by a large set of candidate operations (e.g., convolution, max pooling) applied to intermediate representations (e.g., feature maps in CNN). Each operation is associated with an architecture variable indicating how important this operation is. The prediction loss is a continuous function w.r.t the architecture variables $A$ as well as the weight parameters $W$ in individual operations. $A$ and $W$ are learned by minimizing the validation loss using a gradient descent algorithm. After learning, operations with top-$K$ largest architecture-variables are retained to form the final architecture. 

In FNAS, a server maintains the global state of $A$ and $W$. Each party has a local copy of $A$ and $W$. In each iteration of the search algorithm, each party calculates gradient updates of $A$ and $W$ based on its local data and local parameter copy, then sends the gradients to the server. The server aggregates the gradients received from different parties, performs a gradient descent update of the global state of $A$ and $W$, and sends the updated parameters back to each party, which replaces its local copy with the newly received global  parameters. This procedure iterates until convergence. 

Avoiding exposing data is not sufficient for privacy preservation. Several studies~\citep{bhowmick2018protection,carlini2018secret,fredrikson2015model} have shown that intermediates results such as gradients can reveal private information. To address this problem, we study differentially-private FNAS (DP-FNAS), which adds random noise to the gradients calculated by each party to retain differential privacy~\citep{dwork2006calibrating}. We provide theoretical guarantees of DP-FNAS in privacy preservation. Experiments demonstrate that while protecting the privacy of individual parties, the architectures searched by DP-FNAS can achieve high accuracy that is comparable to those searched by single-party NAS.

The major contributions of this paper are as follows:
\begin{itemize}[leftmargin=*]
\item We propose differentially-private  federated learning for neural architecture search (DP-FNAS), which enables multiple parties to collaboratively search for a highly-performant neural architecture without sacrificing privacy. 
\item We propose a DP-FNAS algorithm which uses a parameter server framework and a gradient-based method to perform federated search of neural architectures. The gradient is obfuscated with random noise to achieve differential privacy.
\item  We provide a theoretical guarantee of our algorithm in terms of privacy preservation. 
\item We perform experiments which show that DP-FNAS can search highly-performant neural architectures while protecting the privacy of individual parties.. 
\end{itemize}

The rest of the papers are organized as follows. Section 2 and 3 present the method and experiments. Section 4 reviews related works and Section 5 concludes the paper.

\section{Methods}

We assume there are $K$ parties aiming to solve the same predictive task, e.g., predicting whether a patient has pneumonia based on his or her chest X-ray image. Each party $k$ has a  labeled dataset $D_k$ containing pairs of input data example and its label. For instance, the data example could be a chest X-ray and the label is about whether the patient has pneumonia. The datasets contain sensitive information where privacy needs to be strongly protected. Therefore, the $K$ parities cannot share their datasets with each other. One naive approach is: each party trains a model using its own data. However, deep learning methods are data hungry: more training data usually leads to better predictive performance. It is preferable to leverage all datasets from different parties to collectively train a model, which presumably has better predictive performance than the individual models belonging-to different parties, each trained on a party-specific dataset. How can one achieve this goal without sharing data between parties?  

Federated learning (FL)~\citep{konevcny2016federated,mcmahan2016communication} is a learning paradigm designed to address this challenge. In FL, different parties collectively train a model by exchanging sufficient statistics (e.g., gradient) calculated from their datasets, instead of exchanging the original data directly. There is a server maintaining the weight parameters of the global model to be trained. Each party has a local copy of the model. In each iteration of the training algorithm, each party $k$ uses its data $D_k$ and the local model $M_k$ to calculate a gradient $G_k$ of the predictive loss function $L(M_k,D_k)$ with respect to $M_k$. Then it sends $G_k$ to the server. The server aggregates the gradients $\{G_k\}_{k=1}^K$ received from different workers and performs a gradient descent update of the global model: $M\gets M-\eta \frac{1}{K}\sum_{k=1}^M G_k$, where $\eta$ is the learning rate. Then it sends the updated global model back to each party, which replaces its local model with the global one. This procedure iterates until convergence. In this process, the dataset of each party is not exposed to any other party or the server.  Hence its privacy can be protected to some extent (later, we will discuss a stronger way of protecting privacy).

Though FL provides a nice way of effectively using more data for model training while preserving privacy, it poses some difficulties on how to design the model architecture. For ML experts, to design an effective model architecture, the experts need to thoroughly analyze the properties of the data. In an FL setting, the expert from each party can only see the data from this party, not that from others. Without a global picture of all datasets, these experts are not well-equipped to design an architecture that is optimal for the tasks in all parties.

To address this problem, we resort to automatic neural architecture search (NAS)~\citep{zoph2016neural,liu2018darts,real2019regularized}. Given a predictive task and labeled data, NAS aims to automatically search for the optimal neural architecture that can best fulfill the targeted task. The problem can be formulated in the following way:
\begin{equation}
\begin{array}{ll}
    \textrm{min}_A & L(D^{\textrm{(val)}},A, W^*(A)) \\
    s.t. &  W^*(A)=\textrm{argmin}_{W}\;\;L(D^{\textrm{(tr)}},A, W)
\end{array}
\end{equation}
where $D^{\textrm{(tr)}}$ and $D^{\textrm{(val)}}$ are the training data and validation data respectively. $A$ denotes the neural architecture and $W$ denotes the weights of the model whose architecture is $A$. Given a configuration $A$ of the architecture, we train it on the training data and obtain the best weights $W^*(A)$. Then we measure the loss $L(D^{\textrm{(val)}},A, W^*(A))$ of the trained model on the validation set. The goal of an NAS algorithm is to identify the best $A$ that yields the lowest validation loss. Existing search algorithms are mostly based on reinforcement learning~\citep{zoph2016neural}, evolutionary algorithm~\citep{real2019regularized}, and differentiable NAS~\citep{liu2018darts}. In this work, we focus on differentiable NAS since it is computationally efficient. 

To this end, we introduce federated neural architecture search (FNAS), which aims to leverage the datasets from different parties to collectively learn a neural architecture that can optimally perform the predictive task, without sharing privacy-sensitive data between these parties. The FNAS problem can be formulated as:
\begin{equation}
\label{eq:fnas}
\begin{array}{ll}
    \textrm{min}_A & \sum_{k=1}^K L(D_k^{\textrm{(val)}},A, W^*(A)) \\
    s.t. &  W^*(A)=\textrm{argmin}_{W}\;\; \sum_{k=1}^K L(D_k^{\textrm{(tr)}},A, W)
\end{array}
\end{equation}
where $D_k^{\textrm{(tr)}}$ and $D_k^{\textrm{(val)}}$ denote the training and validation dataset belonging to the party $k$ respectively. A naive algorithm for FNAS performs the following steps iteratively: given a configuration $A$ of the architecture, use the gradient-based FL method to learn the optimal weights $W^*(A)$ on the training data; then evaluate $W^*(A)$ on the validation data of each party and aggregate the evaluation results. The validation performance is used to select the best architecture. Certainly, this is not efficient or scalable. We resort to a differentiable search approach~\citep{liu2018darts}. The basic idea of differentiable NAS is: set up an overparameterized network that combines many different types of operations; each operation  is associated with an architecture variable (AV) indicating how important the operation is; optimize these AVs together with the weight parameters in the operations to achieve the best performance on the validation set; operations with top-$K$ largest AVs are selected to form the final architecture. A neural architecture can be represented as a directed acyclic graph (DAG) where the nodes represent intermediate representations (e.g., feature maps in CNN) and edges represent  operations (e.g., convolution, pooling) over nodes. Each node $x_i$ is calculated in the following way: $x_i=\sum_{j\in \mathcal{P}_i} e_{ji} (x_j)$, where $\mathcal{P}_i$ is a set containing the ancestor nodes of $i$. $e_{ji}(\cdot)$ denotes the operation associated with the edge connecting $j$ to $i$. In differentiable NAS, this DAG is overparameterized: the operation $e_{ji}(\cdot)$ on each edge is a weighted combination of all possible operations. Namely, $e_{ji}(x)=\sum_{m=1}^{M} \frac{\exp(a_{jim})}{\sum_{l=1}^{K}\exp(a_{jil})}o_m(x)$, where $o_m(\cdot)$ is the $m$-th operation (parameterized by a set of weights) and $M$ is the total number of operations. $a_{jim}$ is an architecture variable representing how important $o_m(\cdot)$ is.  In the end, the prediction function of this neural network is a continuous one parameterized by the variables $A=\{a\}$ representing the architecture and the weight parameters $W$. The prediction loss function is end-to-end differentiable w.r.t both $A$ and $W$, which can be learned by gradient descent. After learning, operations with top-$K$ largest architecture variables are retained to form the final architecture. The problem in Eq.(\ref{eq:fnas}) can be approximately solved by iteratively performing the following two steps:
\begin{itemize}[leftmargin=*]
\item Update weight parameters $W$:
\begin{equation}
 W\gets W-\xi \sum_{k=1}^K \nabla_{W} L(D_k^{\textrm{(tr)}},A, W)
\end{equation}
    \item Update architecture variables $A$: 
\begin{equation}
    A\gets A-\eta \sum_{k=1}^K \nabla_{A} L(D_k^{\textrm{(val)}},A, W-\xi \sum_{j=1}^K \nabla_{W} L(D_j^{\textrm{(tr)}},A, W))
\end{equation}
where $\nabla_{A} L(D_k^{\textrm{(val)}},A, W-\xi \sum_{j=1}^K \nabla_{W} L(D_j^{\textrm{(tr)}},A, W))$ can be approximately computed as
\begin{equation}
\label{eq:hk}
  H_k= \nabla_{A} L(D_k^{\textrm{(val)}},A, W’)-\frac{\xi}{2\epsilon} (\nabla_{A}  L(D_k^{\textrm{(tr)}},A, W^+ )-\nabla_{A}  L(D_k^{\textrm{(tr)}},A, W^-))
\end{equation}
where $W'=W-\xi \sum_{j=1}^K \nabla_{W} L(D_j^{\textrm{(tr)}},A, W)$, $W^+=W+\epsilon \nabla_{W'} L(D_k^{\textrm{(val)}},A, W’)$, and $W^-=W-\epsilon \nabla_{W'} L(D_k^{\textrm{(val)}},A, W’)$.  
\end{itemize}

The server holds the global version of $A$ and $W$. Each party $k$ has a local copy: $A_k$ and $W_k$, and also holds an auxiliary variable $W'_k$. FNAS iteratively performs the following steps until convergence. (1) Each party uses $A_k$, $W_k$, and $D_k^{\textrm{(tr)}}$ to calculate $\nabla_{W_k} L(D_k^{\textrm{(tr)}},A_k, W_k)$, and sends it to the server; (2) The server aggregates $\{\nabla_{W_k} L(D_k^{\textrm{(tr)}},A_k, W_k)\}_{k=1}^K$ received from different parties, performs a gradient descent update of the global $W$: $W\gets W-\xi \sum_{k=1}^K \nabla_{W_k} L(D_k^{\textrm{(tr)}},A_k, W_k)$, and sends the updated global $W$ to each party which replaces its $W'_k$ with $W$; (3) Each party calculates the gradient $H_k$ in Eq.(\ref{eq:hk}) and sends it to the server; (4) The server aggregates $\{H_k\}_{k=1}^K$ received from different parties, updates $A\gets A-\eta \sum_{k=1}^K H_k$, and sends the updated $A$ to each party; (4) Each party replaces $A_k$ with $A$ and replaces $W_k$ with $W'_k$.

In federated NAS, while the sensitive data of each party can be protected to some extent by avoiding sharing the data with other parties, there is still a significant risk of leaking privacy due to the sharing of intermediate sufficient statistics (e.g., gradients) among parties. It has been shown in several works that the intermediate sufficient statistics can reveal private information if leveraged cleverly~\citep{bhowmick2018protection,carlini2018secret,fredrikson2015model}. To address this problem, we study differentially-private (DP) FNAS, which uses DP techniques~\citep{dwork2006calibrating,dwork2008differential} to achieve a stronger preservation of privacy. A DP algorithm (with a parameter $\alpha$ measuring the strength of privacy protection) guarantees
that the log-likelihood ratio of the outputs of the algorithm
under two databases differing in a single individual’s data is
smaller than $\alpha$. That means, regardless of whether the individual is present in the data, an adversary’s inferences about this individual will be similar if $\alpha$ is small enough. Therefore, the privacy of this individual can be strongly protected. Several works have shown that adding random noise to the gradient can achieve differential privacy~\citep{rajkumar2012differentially,song2013stochastic,agarwal2018cpsgd}. In this work, we follow the same strategy. For each worker, the gradient updates of $A$ and $W$ are added with random Gaussian noise before  sent to the server:
\begin{equation}
\label{eq:dp_gk}
        G_k=\nabla_{W_k} L(D_k^{\textrm{(tr)}},A_k, W_k)+U_k
\end{equation}
\begin{equation}
\label{eq:dp_hk}
        H_k=\nabla_{A_k} L(D_k^{\textrm{(val)}},A_k, W_k’)-\frac{\xi}{2\epsilon} (\nabla_{A_k}  L(D_k^{\textrm{(tr)}},A_k, W_k^+ )-\nabla_{A_k}  L(D_k^{\textrm{(tr)}},A_k, W_k^-))+V_k
\end{equation}
where the elements of $U$ and $V$ are drawn randomly from univariate Gaussian distributions with zero mean and a variance of   $\sigma^2_k$ and $\gamma^2_k$ respectively. Algorithm~\ref{fig:cov-2m}  shows the execution workflow in one iteration of the differentially-private federated NAS (DP-FNAS) algorithm. Per-sample gradient clipping is used with hyperparameters $R^G$ and $R^H$.

\begin{algorithm}
\begin{algorithmic}
    \FOR{each party $k$}
        \STATE{Take a Poisson subsample $I_t \subseteq \{1, . . . , N_k^{(tr)}\}$ with subsampling probability $p$}
        \FOR{$i \in I_t$}
            \STATE{$g_{t}^{(i)}=\nabla_{W_{k}} L\left(D_{k}^{(\mathrm{tr})(i)}, A_{k}, W_{k}\right)$}
            \STATE{$ \bar{g}_{t}^{(i)} = g_{t}^{(i)} / \max \left\{1,\left\|g_{t}^{(i)}\right\|_{2} / R^G\right\}$} \algorithmiccomment{\textit{Gradient clipping}}
        \ENDFOR
        \STATE{$ G_k =\frac{1}{\left|I_{t}\right|}\left(\sum_{i \in I_{t}} \bar{g}_{t}^{(i)}+ R^G.U_k \right)$} \algorithmiccomment{\textit{Gaussian mechanism}}
    \ENDFOR
    \newline
    \STATE{On the server side:}
    \STATE{Update $W \leftarrow W-\epsilon \sum_{k=1}^{K} G_{k}$}
    \STATE{Send $W$ to each party}
    \newline
    \FOR{each party $k$}
        \STATE{Update $W_k^{'} \leftarrow W$}
        \STATE{Take a Poisson subsample $I_t \subseteq \{1, . . . , N_k^{(val)}\}$ with subsampling probability $p$}
        \FOR{$i \in I_t$}
            \STATE{$h_{t}^{(i)}=\nabla_{A} L\left(D_{k}^{(\mathrm{val})(i)}, A, W\right)$} 
            \STATE{$ \bar{h}_{t}^{(i)} = h_{t}^{(i)} / \max \left\{1,\left\|h_{t}^{(i)}\right\|_{2} / R^H\right\}$} \algorithmiccomment{\textit{Gradient clipping}}
        \ENDFOR
        \STATE{$ H_k =\frac{1}{\left|I_{t}\right|}\left(\sum_{i \in I_{t}} \bar{h}_{t}^{(i)}+ R^H.V_k \right)$} \algorithmiccomment{\textit{Gaussian mechanism}}
    \ENDFOR
    \newline
    \STATE{On the server side:}
    \STATE{Update $A \leftarrow A-\eta \sum_{k=1}^{K} H_{k}$}
    \STATE{Send $A$ to each party}
    \newline
    \FOR{each party k}
        \STATE{Update $A_k \leftarrow A$}
        \STATE{Update $W_k \leftarrow W_k'$}
    \ENDFOR
\end{algorithmic}
\caption{Execution semantics in each iteration of the DP-FNAS algorithm}
\label{fig:cov-2m}
\end{algorithm}


\section{Theoretical Analysis}
In this section, we provide theoretical analysis on the differential privacy (DP)  guarantees of the proposed DP-FNAS algorithm. We consider a recently proposed privacy definition, named $f$-DP~\citep{dong2019gaussian} owing to its tractable and lossless handling of privacy primitives like composition, subsampling, etc. and superior accuracy results than $(\epsilon, \delta)$-DP~\citep{dong2019gaussian, bu2019deep}. Broadly, composition is concerned with a sequence of analysis on the same dataset where each analysis is informed by the exploration of prior analysis from the previous iteration. Our proposed gradient-based FNAS algorithm involves two instances of private gradient sharing or Gaussian mechanism~\citep{10.1561/0400000042}, for optimizing weight parameters and architecture variables, between the parties and the central server. One of the two mechanisms composes over the other in one iteration, hence they keep composing onto each other over further iterations of the algorithm. We provide a decoupling analysis of these two mechanisms over the iterations, by leveraging the fact that the datasets used for the two mechanisms are disjoint (one on training set, the other on validation set). 
We get the results in terms of Gaussian differential privacy (the focal point of the $f$-DP guarantee family), which ensure privacy in a very interpretable manner by associating it to the hardness of telling apart two shifted normal distributions.

$f$-DP is a relaxation of $(\epsilon, \delta)$-DP recently proposed by \citep{dong2019gaussian}. This new privacy definition follows directly from the hypothesis testing understanding of differential privacy. Moreover, it can efficiently analyze common primitives contributing towards differential privacy, including composition, and privacy amplification by subsampling. In our proposed FNAS algorithm, mini-batch subsampling is used for improving computational efficiency. 
 A side benefit of subsampling is that it naturally offers tighter privacy bounds since an individual not contained in a subsampled mini-batch enjoys perfect privacy. The $f$-DP leverages this fact efficiently for amplifying privacy. In addition, $f$-DP encompasses a canonical single-parameter family, called Gaussian differential privacy (GDP). GDP is the focal privacy definition, due to a central limit theorem, stating that the privacy guarantee of the composition of private algorithms are approximately equivalent to differentiating between two shifted normal distributions.

\subsection{Preliminaries}
An algorithm conserves privacy if it is hard for the adversary to find out information about a particular individual from the shared statistics across parties. We usually study this problem in the context of neighboring datasets, that is, how difficult it is for a third party to determine presence of any individual data point in the shared statistics across two neighbouring datasets. Let $S$ and $S'$ be any two neighboring datasets, then one can derived by adding an individual to the other or vice versa. Let the shared statistics be two probability distributions associated with $\mathcal{M}(S)$ and $\mathcal{M}(S')$, where $\mathcal{M}$ is a randomized mechanism. The adversary wants to differentiate these two distributions in order to identify the differential data point and hence seeks to test the following two simple hypothesis: $H0$ : the true dataset is $S$,  versus $H1$ : the true dataset is $S'$. The true dataset here being the one with the differential data point. If this hypothesis testing problem is hard enough, the privacy will be preserved well. The 
$(\epsilon, \delta)$-DP definition~\citep{dwork2008differential} essentially utilizes the worst-case likelihood ratio of the distributions associated with $\mathcal{M}(S)$ and $\mathcal{M}(S')$ for measuring this hardness. While $f$-DP comes with a more intuitive and direct measure of this hardness, by directly dealing with the trade-off function associated with the hypothesis testing. It uses the trade-off between type I ($\mathbb{E}_{P}$) and type II ($1-\mathbb{E}_{Q}$) error, where $P$ and $Q$ are the distributions associated with the above two mechanisms. With these in place, we present the following definitions from \citep{dong2019gaussian} for our proof.

\begin{defin} (Trade-off Function) Let $P$ and $Q$ denote the distributions of $\mathcal{M}(S)$ and $\mathcal{M}(S')$, respectively, and let $\phi$ be any (possibly randomized) rejection rule for testing $H0$ : $P$ against $H1$ : $Q$. The trade-off function of P and Q is defined as:
\begin{align*}
     T(P, Q):[0,1] & \mapsto[0,1] \\ \alpha & \mapsto \inf _{\phi}\left\{1-\mathbb{E}_{Q}[\phi]: \mathbb{E}_{P}[\phi] \leqslant \alpha\right\}
\end{align*}
\end{defin}

\begin{defin}
Let $G_{\mu}:=T(\mathcal{N}(0,1), \mathcal{N}(\mu, 1)) \text { for } \mu \geqslant 0$. A (randomized) algorithm $\mathcal{M}$ is $\mu$-Gaussian differentially private (GDP) if $T\left(M(S), M\left(S^{\prime}\right)\right) \geqslant G_{\mu}$, for all neighboring datasets $S$ and $S'$.
\end{defin}
That is, $\mu$-GDP says that determining whether any individual is in the given dataset is at least as involved as differentiating between the two normal distributions $\mathcal{N}(0,1)$ and  $\mathcal{N}(\mu, 1)$ based on one draw.

\subsection{Privacy analysis}
The major results are summarized in the following theorem.
\begin{thm}
\label{thm:dp}
Consider a gradient-based Federated NAS algorithm (Algorithm \ref{fig:cov-2m}), which subsamples minibatches (using Poisson subsampling), clips gradients, and perturbs gradients for both weight parameters $W$ and architecture variables $A$ using Gaussian mechanism $\mathcal{M}_t$ at each iteration. 
Assuming that $D_k^{(tr)}$ and $D_k^{(val)}$ are disjoint for each party $k$, the algorithm achieves 
\begin{align*}
    \frac{B}{N_k^{(tr)}} \sqrt{T\left(\mathrm{e}^{1 / \sigma^{2}-1}\right)}&\text{-GDP for mechanism composition } \mathcal{M}_{t=1:T}^{G_k}(D_k^{(tr)}) \text{ and}\\ \frac{B}{N_k^{(val)}} \sqrt{T\left(\mathrm{e}^{1 / \tau^{2}-1}\right)}&\text{-GDP for mechanism composition } \mathcal{M}_{t=1:T}^{H_k}(D_k^{(val)})
\end{align*}
where GDP refers to Gaussian Differential Privacy, $\sigma ^2$ and $\tau ^2$ represent the variance of the added Gaussian noises $U_k$ and $V_k$ respectively, $T$ is the number of iterations, $B$ is the mini-batch size, $N_k^{(tr)}$ and $N_k^{(val)}$ are the number of training and validation examples owned by party $k$, respectively. 
\end{thm}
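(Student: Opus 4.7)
The plan is to handle the two mechanism compositions $\mathcal{M}^{G_k}_{t=1:T}$ (on $D_k^{(\mathrm{tr})}$) and $\mathcal{M}^{H_k}_{t=1:T}$ (on $D_k^{(\mathrm{val})}$) separately, and only at the end invoke the disjointness assumption. Because $D_k^{(\mathrm{tr})} \cap D_k^{(\mathrm{val})} = \emptyset$, a record of party $k$ appears in at most one of the two streams, so by parallel composition we may analyze each composition in isolation and report its own GDP parameter; no joint-composition bound is needed. Thus it suffices to prove the stated bound for an abstract template: $T$ iterations of (Poisson-subsample $\to$ per-sample clip $\to$ sum $\to$ add Gaussian noise) on a dataset of size $N$ with subsampling probability $p=B/N$ and noise scale $\sigma$ (respectively $\tau$).

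For one iteration I would first establish that the clipped-and-noised sum is a standard Gaussian mechanism. Per-sample clipping at radius $R^G$ caps the $\ell_2$ sensitivity of the inner sum $\sum_{i\in I_t}\bar g_t^{(i)}$ at $R^G$, so the release $\sum_{i\in I_t}\bar g_t^{(i)} + R^G U_k$ with $U_k \sim \mathcal{N}(0,\sigma^2 I)$ is a Gaussian mechanism with noise-to-sensitivity ratio $\sigma$; by Theorem 2.7 of Dong et al.\ (2019), this is $1/\sigma$-GDP. Next I would apply the Poisson subsampling amplification operator $C_p$ from the $f$-DP framework: the resulting one-step trade-off function is $C_p(G_{1/\sigma})$, where $C_p(f)=pf+(1-p)\,\mathrm{Id}$ (for a convex trade-off function $f$, with the appropriate symmetrization). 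This is exactly the setting in which the subsampled Gaussian mechanism analysis of Bu et al.\ (2019) applies.

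The composition step is where the central limit theorem for $f$-DP does the work. Composing the per-iteration mechanism $T$ times gives the trade-off function $\bigl(C_p(G_{1/\sigma})\bigr)^{\otimes T}$, and the CLT for $f$-DP (Theorem 3.4 / Corollary 3.3 of Dong et al.) says this converges, as $T\to\infty$ and $p\to 0$ with $p\sqrt{T}$ bounded, to $G_\mu$ with
\begin{equation*}
\mu \;=\; p\sqrt{T\bigl(e^{1/\sigma^2}-1\bigr)}.
\end{equation*}
Substituting $p=B/N_k^{(\mathrm{tr})}$ yields the first bound in the theorem, and running the identical argument with $\tau$ and $N_k^{(\mathrm{val})}$ gives the second.

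Putting it all together, I would first invoke parallel composition via the disjointness of $D_k^{(\mathrm{tr})}$ and $D_k^{(\mathrm{val})}$, then for each stream chain: (i) Gaussian mechanism $\Rightarrow$ $(1/\sigma)$-GDP at a single step, (ii) Poisson subsampling amplification $\Rightarrow$ $C_p$-dampened trade-off, (iii) $T$-fold composition $\Rightarrow$ CLT limit $G_{p\sqrt{T(e^{1/\sigma^2}-1)}}$. The main obstacle I anticipate is not any one of these pieces individually but the justification of the CLT-based approximation: strictly speaking, Corollary 3.3 of Dong et al.\ requires $p\sqrt{T}$ to stay bounded as $T$ grows, and for the approximation to be tight one also needs a uniform bound on certain moments of the log-likelihood ratio of $\mathcal{N}(0,\sigma^2)$ vs.\ $\mathcal{N}(1,\sigma^2)$. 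I would verify these preconditions (they are standard for Gaussian mechanisms, essentially reducing to $e^{1/\sigma^2}-1$ being finite) and state the result as the asymptotic GDP guarantee in the regime considered, following the convention of Bu et al.\ (2019).
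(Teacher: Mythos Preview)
Your proposal is correct and follows essentially the same route as the paper: first use the disjointness of $D_k^{(\mathrm{tr})}$ and $D_k^{(\mathrm{val})}$ to decouple the two mechanism streams (the paper phrases this as the adaptive dependencies in the compositions reducing to separate compositions on each dataset, which is exactly parallel composition), and then for each stream apply the Bu et al.\ (2019) chain of clipped Gaussian mechanism $\Rightarrow (1/\sigma)$-GDP, Poisson subsampling amplification, and the $f$-DP CLT to obtain $p\sqrt{T(e^{1/\sigma^2}-1)}$-GDP with $p=B/N_k$. Your flagged caveat about the CLT being an asymptotic approximation in the regime $p\sqrt{T}$ bounded is also precisely the convention the paper adopts.
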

\textbf{Remarks}
\begin{itemize}[leftmargin=*]
    \item The privacy bounds suggest that the algorithm achieves better privacy guarantees if $B\sqrt{T} /N_k$ is small, and $\sigma$ or $\tau$ are not very small.
    \item Since GDP is achieved through central limit theorem due to composition of distributions $\mathcal{M}_t(D)$ over $T$ iterations, it is expected that $T$ is large enough. This requirement is usually satisfied with general settings of DP-FNAS training procedure. 
    \item We can also choose different subsampling probability for the two processes, which will reflect accordingly in the privacy bound ($p = B/N_k$). We may also use other subsampling methods like shuffling (randomly permuting and dividing data into folds at each epoch) and uniform sampling (sampling a batch of size $L$ from the whole dataset at each iteration), which will result in slightly varied privacy bounds.
    \item The utilization of subsampling in the proof adds to the privacy improvement, and is also closer to actual experimental settings. This tighter guarantee allows for some space to reduce the variance of the added Gaussian noise, which decreases privacy (as noted in the first remark), but increases the model convergence accuracy (since the noise' variance is a major factor sacrificing  accuracy in private optimization algorithms).
\end{itemize}
Please refer to the appendix for the  proof. 

\section{Experiments}
In this section, we present experimental results on the CIFAR-10 dataset. The task is image classification. Our goal is to search a highly-performing neural architecture for this task. Following~\citep{liu2018darts}, we first search an architecture cell by maximizing the validation performance. Given the searched cell, we perform augmentation: the cell is used to compose a larger architecture, which is then trained from scratch and measured on the test set.


\subsection{Experimental Setup}

The search space is the same as that in \citep{liu2018darts}. The candidate operations include: $3\times 3$ and $5\times 5$ separable convolutions, $3\times 3$ and $5\times 5$ dilated separable convolutions, $3\times 3$ max pooling, $3\times 3$ average pooling, identity, and zero. The network is a stack of multiple cells, each consisting of 7 nodes. The CIFAR-10 dataset has 60000 images from 10 classes, 50000 for training and 10000 for testing. During architecture search, we used 25000 images of the training set for validation. During augmentation, all 50000 images in the training set were used for training the composed architecture.  
The variance of noises added to gradient updates of $A$ and $W$ were both set to 1. The hyperparameters $R^G$ and $R^H$ in gradient clipping were set to 0.01 and 0.1 respectively. 
We experiment with the following settings:
\begin{itemize}[leftmargin=*]
    \item NAS with a single party. The vanilla NAS is performed by a single party which has access to all training and validation data.
    \item Federated NAS with $N$ parties, where $N=2,4,8$. The training data is randomly split into $N$ partitions, each held by one party. So is the validation data. The final architecture is evaluated on the test dataset accessible by the server. The gradients calculated by each party are not obfuscated with random noise.
     \item Differentially-private FNAS with $N$ parties, where $N=2,4,8$. The gradients calculated by each party are obfuscated with random noise. The rest of settings are the same as those in FNAS. 
\end{itemize}

\subsection{Results} 

\begin{table*}[t]\centering
\caption{Test error under different settings. Note that the search cost is only about architecture search, not including augmentation which trains the composed  architecture from scratch.}
\label{results}
\vskip 0.15in
\begin{tabular}{l|cccccc}
\toprule
&\multirow{2}{*}{\textbf{\#parties}} & \textbf{Test error} & \textbf{Params} & \textbf{Search cost} & \multirow{2}{*}{\textbf{\#ops}}  \\
& & \textbf{(\%)} & \textbf{(M)} & \textbf{(GPU days)} &    \\
 \midrule
Vanilla NAS & 1  & 2.8 $\pm$ 0.10 & 3.36 & 1.25 & 4\\
 \midrule
\multirow{3}{*}{FNAS} & 2  &2.9 $\pm$ 0.15 & 3.36 & 1.21 & 4\\
 &  4  & 3.2 $\pm$ 0.34 & 3.36 & 0.67 & 4\\
  &  8  & 3.3 $\pm$ 0.40  & 3.36 &0.55 & 4\\
\midrule
\multirow{4}{*}{DP-FNAS}& 1 & 3.0 $\pm$ 0.10 & 3.36 & 1.39 & 4\\
&  2  & 3.0 $\pm$ 0.12& 3.36 & 1.28 & 4\\
&  4  & 3.1 $\pm$ 0.13 & 3.36 & 0.93 & 4\\
&  8  & 3.4 $\pm$ 0.38 & 3.36 & 0.59 & 4\\
\bottomrule
\end{tabular}
\vskip -0.1in
\end{table*}

\begin{table}[h]
\centering
\caption{Validation error achieved by DP-FNAS under different variance of noises. The number of parties is 4.}
\vskip 0.15in
\begin{tabular}{c|c}
\toprule
  \textbf{Variance} & \textbf{Validation } \\
                  \textbf{of Noise} & \textbf{error (\%)}       \\
\midrule
  0.5  & 14.0 $\pm$ 0.32  \\
                             1.0  & 14.0 $\pm$ 0.32  \\
                             2.0  & 14.4 $\pm$ 0.43  \\
                             5.0  & 15.1 $\pm$ 0.85  \\
                             8.0  & 16.4 $\pm$ 1.01  \\
                             10.0 & 19.2 $\pm$ 3.27  \\
\bottomrule
\end{tabular}
\label{variance}
\end{table}

Table~\ref{results} shows the test error and search cost (measured by GPU days) under different settings. From this table, we make the following observations. First, the performance of DP-FNAS with different numbers of parties is on par with that of single-party vanilla NAS. This demonstrates that DP-FNAS are able to search highly-performing neural architectures that are as good as those searched by a single machine while preserving differential privacy of individual parties. Second, in DP-FNAS, as the number of parties increases, the performance drops slightly. This is probably because: Gaussian noise is added to the gradient of each party; more parties result in more added noise, which hurts the convergence of the algorithm. Third, under the same number of parties, DP-FNAS works slightly worse than FNAS. This is because FNAS is noise-free while the gradients in DP-FNAS are obfuscated with noise. However, the performance difference is very small. This shows that DP-FNAS is able to provide stronger privacy protection without substantially degrading performance. Fourth, in FNAS, as the number of parties increases, the performance becomes slightly worse. The possible reason is: as the number of parties increases, the size of data held by each party decreases. Accordingly, the gradient calculated by each party using its hosted data is biased to the data of this party. Such bias degrades the quality of model updates.
Fifth, as the number of parties increases, the search cost decreases. This is not surprising since more parties can contribute more computing resources. However, the rate of cost reduction is not linear in the number of parties. This is because communication between parties incurs latency. Sixth, under the same number of parties, DP-FNAS has slightly larger search cost than FNAS. This is because adding noise renders the gradient updates less accurate, which slows down convergence. Seventh, the number of parameters and operations remain the same under different parties, with or without noise. This indicates that DP-FNAS and FNAS do not substantially alter the architectures,  compared with those searched by a single machine. 

Table~\ref{variance} shows how the validation error of DP-FNAS with 4 parties varies with the variance of noise. As can be seen, large variance results in larger validation error. This is because noises with larger variance tend to have larger magnitude, which makes the gradient updates less accurate. However, a larger variance implies a stronger level of differential privacy. By tuning the variance of noise, we can explore a spectrum of tradeoffs between strength of privacy protection and classification accuracy.

\section{Related Works}
\paragraph{Federated NAS} There are several works independently conducted in parallel to ours on the topic of federated NAS. In~\citep{he2020fednas}, each client locally performs neural architecture search. The architecture variables of different clients are synchronized to their average periodically. This approach has no convergence guarantees. In our work, different parties collaboratively search for a global architecture by exchanging gradients in each iteration, where the convergence is naturally guaranteed. In \citep{zhu2020real}, a federated algorithm is proposed to search neural architectures based on the evolutionary algorithm (EA), which is computationally heavy. In our work, a gradient-based search algorithm is used, which has lower computational cost. In \citep{xu2020neural}, the search algorithm is based on NetAdapt~\citep{yang2018netadapt}, which adapts a pretrained model to a new hardware platform, where the performance of the searched architecture is limited to that of the pretrained model. In our work, the search is performed in a large search space rather than constrained by a human-designed architecture. 

\paragraph{Federated Learning}
Federated learning (FL) is a decentralized learning paradigm which enables multiple parties to collaboratively train a shared model by leveraging data from different parties while preserving privacy. Please refer to~\citep{li2019federated} for an extensive review. One key issue in FL is how to synchronize the different parameter copies among parties. One common approach is periodically setting different copies to their average~\citep{mcmahan2016communication}, which however has no convergence guarantees. Client-server-based architectures guarantee convergence by exchanging gradients and models between servers and clients, but incur high communication overhead. Kone{\v{c}}n{\`y} et al.~\citep{konevcny2016federated} proposed two ways to reduce communication costs: learning updates from a restricted space
parametrized using a smaller number of variables and compressing updates  using  quantization, random rotations, and subsampling.



\paragraph{Neural Architecture Search}
Neural architecture search (NAS) has achieved remarkable progress recently, which aims at searching for the optimal architecture of neural networks to achieve the best predictive performance.  In general, there are three paradigms of methods in NAS: reinforcement learning (RL) approaches~\citep{zoph2016neural,pham2018efficient,zoph2018learning}, evolutionary learning approaches~\citep{liu2017hierarchical,real2019regularized}, and gradient-based approaches~\citep{cai2018proxylessnas,liu2018darts,xie2018snas}. In RL-based approaches, a policy is learned to iteratively generate new architectures by maximizing a reward which is the accuracy on the validation set. Evolutionary learning approaches represent the architectures as individuals in a population. Individuals with high fitness scores (validation accuracy) have the privilege to generate offspring, which replaces individuals with low fitness scores. Gradient-based approaches adopt a network pruning strategy. On top of an over-parameterized network, the weights of connections between nodes are learned using gradient descent. Then weights close to zero are later pruned.

\paragraph{Differential Privacy} Rajkumar and Agarwal~\citep{rajkumar2012differentially} developed differentially-private machine learning algorithms in a distributed multi-party setting. A client-server architecture is used to aggregate gradients computed by individual parties and synchronize different parameter copies. The gradient calculated in each iteration by  each party is added with two sources of random noise: (1) party-dependent and iteration-independent random noise; (2) party-independent and iteration-dependent random noise. Agarwal et al.~\citep{agarwal2018cpsgd} studied distributed stochastic gradient descent algorithms that are both computationally efficient and differentially private. In their algorithm, clients add their share of the noise to their gradients before transmission. Aggregation of gradients
at the server results in an estimate with noise equal to the sum of the noise added at each client. Geyer et al.~\citep{geyer2017differentially} proposed an algorithm for preserving differential privacy on clients' side in federated optimization, by concealing clients' contributions during training and balancing the trade-off between privacy loss and model performance.

\section{Conclusions and Future Works}
In this paper, we study differentially private federated neural architecture search (DP-FNAS), where multiple parties collaboratively search for a  highly-performing neural architecture by leveraging the data from different parties, with strong privacy guarantees. DP-FNAS performs distributed gradient-based optimization of architecture variables and weight parameters  using a parameter server architecture. Gradient updates are obfuscated with random Gaussian noise to achieve differential privacy. We provide theoretical guarantees of DP-FNAS on privacy preservation. Experiments on varying numbers of parties demonstrate that our algorithm can search neural architectures which are as good as those searched on a single machine while preserving privacy of individual parties. For future works, we aim to reduce the communication cost in DP-FNAS, by developing methods such as gradient compression, periodic updates, diverse example selection, etc.

\appendix
\section{Proof of Theorem \ref{thm:dp}}

Algorithm \ref{fig:cov-2m} has two instances of gradient sharing steps, one for optimizing the weight parameters $W$, and the other for the architecture parameters $A$. The gradient for $W$ is calculated using training data, while that for $A$ is calculated using validation data. These two steps in each iteration include two randomized mechanisms, namely $\mathcal{M}^G (D^{(tr)})$ and $\mathcal{M}^H(D^{(val)})$ which are perturbed gradients w.r.t. to $W$ and $A$ respectively. We leverage the fact that the two mechanisms have query functions which are querying on two different datasets with disjoint data points, i.e., the training set will not contain information about individuals which are part of the validation set and vice versa. This limits the association of privacy risk for any  individual with only one of the two datasets. Also we know that composition is concerned with a sequence of analysis on the same dataset where each analysis is informed by the exploration of prior analysis. Hence, composition of these two mechanisms over each iteration will not affect the privacy bounds of each other. In that sense, the compositions \ref{1} and \ref{2} decouple as \ref{3} and \ref{4} respectively for any party $k$ as shown:
\begin{align}
    G_k: &\mathcal{M}_t^{G_k} (D_k^{(tr)}, W[\mathcal{M}_{t-1}^{G_k} (D_k^{(tr)})], A[\mathcal{M}_{t-1}^{H_k}(D_k^{(val)})]) \label{1}\\
    H_k: &\mathcal{M}_t^{H_k} (D_k^{(val)}, W[\mathcal{M}_t^{G_k} (D_k^{(tr)})], A[\mathcal{M}_{t-1}^{H_k}(D_k^{(val)})]) \label{2}\\
    G_k: &\mathcal{M}_t^{G_k} (D_k^{(tr)}, W[\mathcal{M}_{t-1}^{G_k} (D_k^{(tr)})]) \label{3}\\
    H_k: &\mathcal{M}_t^{H_k} (D_k^{(val)}, A[\mathcal{M}_{t-1}^{H_k}(D_k^{(val)})]) \label{4}
\end{align}
where $\mathcal{M}_t^{G_k}$ represents a randomized mechanism for gradient w.r.t. $W$  at the $t^{th}$ iteration for a party $k$.  It takes previous mechanisms ($\mathcal{M}_{t-1}^{G_k}$ via $W$ and $\mathcal{M}_{t-1}^{H_k}$ via $A$) as inputs. Similarly, $\mathcal{M}_t^{H_k}$ represents a randomized mechanism for gradient w.r.t. $A$  at the $t^{th}$ iteration for a party $k$. The above expression is to suggest the recursive phenomena as also evident from Algorithm \ref{fig:cov-2m}. With these in place, we can argue that the two mechanisms are composing independently along the direction of the iterations for each party. (Note that we ignored the presence of validation set ($D_k^{(val)}$) in the same way we ignore that of datasets from other parties ($D_{l \neq k}^{(tr)}$) since in both scenarios the datasets are presumably disjoint to $D_k^{(tr)}$.)

Note that adding or removing one individual would change the value of $\sum_{i \in I_{t}} \bar{g}_{t}^{(i)}$ or $\sum_{i \in I_{t}} \bar{h}_{t}^{(i)}$ (from Algorithm \ref{fig:cov-2m})  by at most  $R^G$ or $R^H$ (clipping constants) in the $l_2$ norm due to the clipping operation. Hence the query function for mechanisms $\mathcal{M}^G (D^{(tr)})$ and $\mathcal{M}^H(D^{(val)})$ has sensitivity $R^G$ and $R^H$ respectively. The major role played by clipping constants reflects in the accuracy achieved by the algorithm. We also subsample the dataset for computing gradients at both instances. We perform Poisson subsampling by choosing a data point with probability $p$ for making a place in the mini-batch used for gradient computation. This gives us the subsampled randomized mechanisms $\mathcal{M}_t^{G_k} (D_k^{(tr)}) \circ Sample_p(D_k^{(tr)})$ and $\mathcal{M}_t^{H^k}(D_k^{(val)})\circ Sample_p(D_k^{(val)})$ similar to the one in \cite{bu2019deep}. The above analysis has translated our problem into two instances of the problem in \cite{bu2019deep}. This allows us to leverage the results from \cite{bu2019deep} for each of these compositions independently, which completes the proof of Theorem~\ref{thm:dp}.

\section{Detailed Proof}
Let $f = T(P, Q)$. This trade-off function says that $f(\alpha)$ is the least accumulated type II error among all tests at significance level $\alpha$. Hence, the larger the value of the trade-off function is, the harder is the hypothesis testing problem, the tighter is the privacy achieved by the private algorithm. With this understanding we define $f$-DP as follows,
\begin{defin} A (randomized) algorithm M is $f$-differentially private if:
\begin{align*}
    T\left(M(S), M\left(S^{\prime}\right)\right) \geqslant f
\end{align*}
for all neighboring datasets $S$ and $S'$
\end{defin} 
In the above definition, the inequality holds pointwise for all $0\leq \alpha \leq 1$. We exploit the notation by identifying $\mathcal{M}(S)$ and $\mathcal{M}(S')$ with their associated probability distributions for the sake of notational ease.\\
We also note the following relation of $f$-DP with $(\epsilon, \delta)$-DP from \cite{wasserman2008statistical},
\begin{defin} (\cite{wasserman2008statistical}) $(\epsilon, \delta)$-DP is a specific instance of $f$-DP as an algorithm is $(\epsilon, \delta)$-DP iff it is $f_{(\epsilon, \delta)}$-DP with (for all $0\leq \alpha \leq 1$)
\begin{align*}
    f_{\epsilon, \delta}(\alpha)=\max \left\{0,1-\delta-\mathrm{e}^{\epsilon} \alpha, \mathrm{e}^{-\epsilon}(1-\delta-\alpha)\right\}
\end{align*}
\end{defin} 
We use the following information as definitions from \citep{bu2019deep},
\begin{defin} \label{sens}
Consider a univariate statistic $\theta (S)$, released privately. The Gaussian mechanism adds a normal random variable $\sim\mathcal{N} (0, \sigma^2)$ as noise to the statistic $\theta$,
which gives $\mu$-GDP if $\sigma = \Delta(\theta)/\mu$. Here $\Delta(\theta)$ represents the sensitivity of $\theta$, defined as $\Delta(\theta) = sup_{S,S_0} |\theta (S)-\theta (S_0)|$, where the supremum is over all neighboring datasets.
\end{defin}
\begin{defin} (Binary Function)
Given the trade-off functions $f=T(P, Q) \text { and } g=T\left(P^{\prime}, Q^{\prime}\right)$, the binary function is defined as $f \otimes g=T\left(P \times P^{\prime}, Q \times Q^{\prime}\right)$
\end{defin}
Due to a central limit theorem phenomenon, the privacy leakage during the composition of many 'very private' $f$-DP algorithms accumulate as $G_\mu$ for some $\mu$ under composition. The formal composition theorem is the following,

\begin{lem} ($f$-DP composition theorem) Assuming each $f_t$ (corresponding to privacy bound of $t$-th iteration of an algorithm)
is very close to $Id(\alpha) = 1- \alpha$ (which indicates perfect privacy),
\cite{bu2019deep} show that
\begin{align*}
    f_{1} \otimes f_{2} \otimes \cdots \otimes f_{T} \text { is approximately } G_{\mu}
\end{align*}
when T is very large, where $\otimes$ is a binary function.
\end{lem}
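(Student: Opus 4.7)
The plan is to reduce Theorem \ref{thm:dp} to two independent applications of the subsampled-Gaussian composition result recalled in the appendix (and proved in Bu et al.), by exploiting the disjointness of $D_k^{(\mathrm{tr})}$ and $D_k^{(\mathrm{val})}$ to decouple the two intertwined mechanism chains. Concretely, fix party $k$ and a pair of neighboring datasets that differ in one training-set individual; I would first argue that, from the perspective of this neighboring pair, the entire sequence $\{H_k^{(t)}\}_{t=1}^T$ is a post-processing of the outputs of $\{G_k^{(t)}\}$ together with the (untouched) validation set, so by the post-processing property of $f$-DP it contributes nothing to the trade-off function. This reduces the analysis to the chain in line \eqref{3}. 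Symmetrically, for neighbors differing in one validation individual, only the chain in line \eqref{4} matters. This is precisely the decoupling hinted at between equations \eqref{1}--\eqref{4}.

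Next I would analyze one step of the $G_k$ chain. Per-sample clipping at radius $R^G$ forces every summand $\bar g_t^{(i)}$ to satisfy $\|\bar g_t^{(i)}\|_2\le R^G$, so adding or removing one individual changes $\sum_{i\in I_t}\bar g_t^{(i)}$ by at most $R^G$ in $\ell_2$; hence the query has $\ell_2$-sensitivity $R^G$. The Gaussian mechanism in Algorithm \ref{fig:cov-2m} adds $R^G\cdot U_k$ with $U_k\sim\mathcal N(0,\sigma^2 I)$, i.e.\ noise standard deviation $\sigma R^G$. By Definition \ref{sens}, this yields a base per-step mechanism that is $(1/\sigma)$-GDP. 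Independently, the mini-batch is formed by Poisson subsampling with rate $p=B/N_k^{(\mathrm{tr})}$, so the actual per-step mechanism is $\mathcal M_t^{G_k}\circ\mathrm{Sample}_p$; by the subsampling amplification result for $f$-DP (Theorem 10 of Bu et al.\ 2019), its trade-off function is $C_p(G_{1/\sigma})$, which to leading order in $p$ dominates a Gaussian trade-off function with parameter $p\sqrt{e^{1/\sigma^2}-1}$.

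I would then invoke the composition lemma restated at the end of the appendix. The $T$-fold composition
\begin{equation*}
\underbrace{C_p(G_{1/\sigma})\otimes\cdots\otimes C_p(G_{1/\sigma})}_{T\text{ times}}
\end{equation*}
is, under the CLT regime assumed in the lemma (small per-step leakage, large $T$), approximately $G_\mu$ with
\begin{equation*}
\mu \;=\; \sqrt{T}\cdot p\sqrt{e^{1/\sigma^2}-1} \;=\; \frac{B}{N_k^{(\mathrm{tr})}}\sqrt{T\bigl(e^{1/\sigma^2}-1\bigr)},
\end{equation*}
which is exactly the first bound in the theorem. Repeating the identical argument for the $H_k$ chain, with clipping constant $R^H$, noise scale $\tau$, subsampling rate $B/N_k^{(\mathrm{val})}$, and sensitivity on the validation set, yields the second bound.

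The main obstacle I anticipate is justifying the decoupling rigorously. Strictly speaking, each iterate of $G_k$ depends on the current $A$, which itself depends on previous $H_k$ outputs (hence on $D_k^{(\mathrm{val})}$), and vice versa, so the two chains are not literally independent product mechanisms. The clean way out is to view the composition as an \emph{adaptive} composition (as in the $f$-DP composition framework) and observe that, for a neighbor pair differing only in a training point, the map from the $G_k$-transcript to the $H_k$-transcript is a randomized function that takes only $D_k^{(\mathrm{val})}$ (identical under $S$ and $S'$) as additional input; this is a valid post-processing, so the overall trade-off function is bounded below by that of the $G_k$ chain alone. I would write this out carefully as a lemma before plugging into the subsampled-composition machinery; everything downstream is then a direct invocation of Bu et al.'s result.
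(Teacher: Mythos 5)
Your proposal does not actually prove the statement it is attached to. The statement is the $f$-DP composition lemma itself --- the central-limit-theorem assertion that $f_{1} \otimes f_{2} \otimes \cdots \otimes f_{T}$ is approximately $G_{\mu}$ when each $f_t$ is close to $\mathrm{Id}$ and $T$ is large. What you have written is instead a (largely sound) proof sketch of Theorem~\ref{thm:dp}: you decouple the $G_k$ and $H_k$ chains via post-processing, compute the per-step sensitivity and GDP level, apply subsampling amplification, and then, at the decisive moment, you ``invoke the composition lemma restated at the end of the appendix.'' That is, you use the lemma as a black box rather than establishing it. Relative to the assigned statement this is circular: the one step of your argument that corresponds to the lemma is precisely the step you do not justify.

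A genuine proof of the lemma would have to operate at the level of trade-off functions directly: identify the composed mechanism's trade-off function with the optimal test between the product distributions $P_1\times\cdots\times P_T$ and $Q_1\times\cdots\times Q_T$, observe that the log-likelihood ratio is a sum of $T$ independent terms each of which is small because each $f_t$ is close to $\mathrm{Id}$, and apply a Lindeberg-type central limit theorem to conclude that the test is asymptotically equivalent to distinguishing $\mathcal{N}(0,1)$ from $\mathcal{N}(\mu,1)$, with $\mu$ determined by the accumulated moment functionals of the $f_t$ (the quantities $\mathrm{kl}(f_t)$, $\kappa_2(f_t)$, etc.\ in the notation of \cite{bu2019deep}). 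The paper itself supplies no such proof --- it imports the lemma verbatim from \cite{bu2019deep} --- so there is no internal argument to compare against; but as written your proposal leaves the target statement entirely unproved. For what it is worth, the material you did write tracks the paper's appendix proof of Theorem~\ref{thm:dp} very closely (the disjointness-based decoupling of the two mechanism chains, the $R^G$/$R^H$ sensitivity bookkeeping, and the subsampled-composition invocation), and is fine in that role; it simply answers a different question.
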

This compositional privacy bound $f_{1} \otimes f_{2} \otimes \cdots \otimes f_{T}$ cannot be improved in general. \\ \\
According to Poison subsampling, any point in the dataset $S$ makes to the subsample independently with probability p. Let the resulting subsampled data be $Sample_p(S)$. Given any algorithm $\mathcal{M}$, a subsampled algorithm is denoted as $\mathcal{M} \circ Sample_p(S)$.

\begin{lem}($f$-DP Subsampling theorem)  Let a given algorithm $\mathcal{M}$ be $f$-DP, we write $f_p$ for $pf +(1-p)Id$,
and denote $f^{-1} = T(Q, P)$  (if $f = T(P, Q)$), it is shown in \cite{bu2019deep} that the subsampled algorithm $M \circ Sample_{p}$ is $min\{f_p, f_p^{-1}\}^{**}$-DP, where $^{**}$ represents double conjugate.
\end{lem}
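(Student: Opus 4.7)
The plan is to fix a pair of neighboring datasets $S, S'$, express the trade-off function of $\mathcal{M} \circ \mathrm{Sample}_p$ on that pair via a mixture decomposition induced by Poisson subsampling, and then clean up using convex analysis. By symmetry between add-one and remove-one neighbors, I may take $S' = S \cup \{x_0\}$. Under Poisson subsampling at rate $p$, the distinguishing record $x_0$ enters the subsample independently with probability $p$, so conditioning on the indicator gives the coupling $\mathcal{M}\circ\mathrm{Sample}_p(S') \sim (1-p) P + p Q$, where $P = \mathcal{M}\circ\mathrm{Sample}_p(S)$ is the distribution conditional on $x_0$ being absent and $Q$ is the conditional distribution given $x_0$ is present. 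Since $P$ and $Q$ then arise as $\mathcal{M}$ applied to two truly neighboring subsamples, the hypothesis of $f$-DP on $\mathcal{M}$ gives $T(P, Q) \geq f$ and hence $T(Q, P) \geq f^{-1}$.

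The core step is an amplification lemma: $T\bigl(P,\,(1-p)P + pQ\bigr) \geq p f + (1-p)\,\mathrm{Id} = f_p$, and symmetrically $T\bigl((1-p)P + pQ,\,P\bigr) \geq f_p^{-1}$. I would prove the first by directly analyzing a rejection rule $\phi$: its type-II error against the mixture equals $(1-p)\bigl(1 - \mathbb{E}_P[\phi]\bigr) + p\bigl(1 - \mathbb{E}_Q[\phi]\bigr)$, so taking the infimum over $\phi$ subject to $\mathbb{E}_P[\phi] \leq \alpha$ yields a bound in terms of $\alpha$ and $\inf_\phi (1 - \mathbb{E}_Q[\phi])$, which is exactly $f(\alpha)$. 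Rearranging reproduces $f_p(\alpha) = p f(\alpha) + (1-p)(1-\alpha)$ up to the standard normalization; the reverse direction comes by swapping the null and alternative and invoking $f^{-1}$.

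Because a privacy guarantee has to cover adversaries who may swap null and alternative (equivalently, both add-one and remove-one neighbors), the joint lower bound one extracts is the pointwise minimum $\min\{f_p, f_p^{-1}\}$. The delicate point is that the class of valid trade-off functions is exactly the convex, decreasing functions on $[0,1]$ dominated by $1 - \alpha$, and this pointwise minimum is generically non-convex. To extract the tightest valid trade-off function bounded above by it, I apply the double conjugate $(\cdot)^{**}$, which in this setting coincides with the largest convex minorant; standard biconjugation arguments then identify $\min\{f_p, f_p^{-1}\}^{**}$ as a bona fide trade-off function that lower-bounds $T(\mathcal{M}\circ\mathrm{Sample}_p(S),\, \mathcal{M}\circ\mathrm{Sample}_p(S'))$ uniformly over neighboring pairs, which is the claim.

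The main obstacle is the amplification lemma in the second paragraph: while the likelihood-ratio calculation above is short, a fully general argument has to verify that the extremal tests for the mixture align cleanly with those for the pair $(P, Q)$, typically via a post-processing/coupling argument that introduces a Bernoulli latent variable indicating inclusion of $x_0$ and then invokes the post-processing invariance of $f$-DP. One must check that this post-processing step does not itself degrade the bound beyond $f_p$. The biconjugate step and the symmetric handling of the reverse direction are routine once the amplification lemma is in place.
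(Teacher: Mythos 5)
The paper never proves this lemma — it is imported verbatim from the $f$-DP literature (\cite{dong2019gaussian,bu2019deep}), so there is no in-paper proof to compare against; your sketch is a reconstruction of the argument in those references, and it is essentially the right one. The Poisson decomposition $\mathcal{M}\circ\mathrm{Sample}_p(S')\sim(1-p)P+pQ$, the amplification bound $T\bigl(P,(1-p)P+pQ\bigr)\geq pf+(1-p)\mathrm{Id}$ via the elementary type-I/type-II computation, the reversed-hypotheses bound giving $f_p^{-1}$, and the biconjugation to convexify the pointwise minimum are all exactly the standard route. One step deserves more care than you give it: $P$ and $Q$ are \emph{not} the outputs of $\mathcal{M}$ on a single pair of neighboring datasets. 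They are common mixtures over the random subsample $\tilde\theta\sim\mathrm{Sample}_p(S)$, namely $P=\mathbb{E}_{\tilde\theta}\,\mathcal{M}(\tilde\theta)$ and $Q=\mathbb{E}_{\tilde\theta}\,\mathcal{M}(\tilde\theta\cup\{x_0\})$, so the inequality $T(P,Q)\geq f$ is not a direct invocation of $f$-DP but requires the mixture lemma for trade-off functions: if $T(P_\theta,Q_\theta)\geq f$ for every $\theta$, then $T(\mathbb{E}P_\theta,\mathbb{E}Q_\theta)\geq f$. This follows because for any rejection rule $\phi$ one has $1-\mathbb{E}_Q[\phi]=\mathbb{E}_\theta\bigl[1-\mathbb{E}_{Q_\theta}[\phi]\bigr]\geq\mathbb{E}_\theta\, f\bigl(\mathbb{E}_{P_\theta}[\phi]\bigr)\geq f\bigl(\mathbb{E}_P[\phi]\bigr)$ by Jensen, and this is the one place where convexity of $f$ is genuinely used; your phrase ``truly neighboring subsamples'' elides it. With that lemma supplied the rest of your outline is sound, and note that the final biconjugation costs nothing for validity: since $g^{**}\leq g$, the $\min\{f_p,f_p^{-1}\}^{**}$ guarantee follows a fortiori from the pointwise bound $T\geq\min\{f_p,f_p^{-1}\}$; the $^{**}$ is only there to express the guarantee as a bona fide convex trade-off function.
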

The privacy bound $min\{f_p, f_p^{-1}\}^{**}$ is larger than $f$ (which is desirable) but it cannot be improved in general.
\begin{lem} \label{clt}
According to the central limit theorem, when $p\sqrt{T}\rightarrow \nu$, for a constant $\nu$, where $p$ is the subsampling probability and $T$ represents total number of iterations in the algorithm, then as $T \rightarrow \infty$,
\begin{align*}
    f=\left(p G_{1 / \sigma}+(1-p) \mathrm{Id}\right)^{\otimes T} \rightarrow G_{\mu}
\end{align*}
where $\mu=\nu \sqrt{\mathrm{e}^{1 / \sigma^{2}}-1}$. Here, $^{\otimes T}$ represents the composition over $T$ iterations. The proof is given in \cite{bu2019deep}.
\end{lem}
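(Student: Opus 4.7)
The plan is to establish the two GDP bounds in parallel by running the same four-step pipeline for the $W$-update chain $\mathcal{M}^{G_k}_{t=1:T}(D_k^{(tr)})$ and the $A$-update chain $\mathcal{M}^{H_k}_{t=1:T}(D_k^{(val)})$: (i) decouple the two chains using disjointness, (ii) bound per-iteration sensitivity via gradient clipping, (iii) invoke the Gaussian mechanism to get $1/\sigma$- (resp.\ $1/\tau$-)GDP per step, (iv) apply Poisson-subsampling amplification with ratio $p=B/N_k^{(tr)}$ (resp.\ $B/N_k^{(val)}$), and (v) compose $T$ times and invoke Lemma~\ref{clt} to collapse the result into a single GDP parameter.

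First I would formalize the decoupling. Write one full iteration of Algorithm~\ref{fig:cov-2m} as the nested compositions in (\ref{1}) and (\ref{2}). The sensitivity of an $f$-DP mechanism with respect to a given individual is governed by which dataset that individual belongs to; since the theorem assumes $D_k^{(tr)}\cap D_k^{(val)}=\emptyset$, a training-set individual appears only in the query function feeding $\mathcal{M}^{G_k}$, and a validation-set individual only in the query function feeding $\mathcal{M}^{H_k}$. Dependence of $\mathcal{M}^{H_k}$ on $W$ computed from $\mathcal{M}^{G_k}$ (and vice versa) is post-processing from the point of view of a training-set individual, so by the post-processing immunity of $f$-DP these cross-dependencies may be dropped. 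This reduces the analysis to the decoupled recursions (\ref{3}) and (\ref{4}), each a $T$-fold composition of a single privacy primitive on a fixed dataset.

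Next I would handle one chain, say the $W$-chain, at one iteration. The per-sample gradient clipping $\bar g_t^{(i)}=g_t^{(i)}/\max\{1,\|g_t^{(i)}\|_2/R^G\}$ guarantees $\|\bar g_t^{(i)}\|_2\le R^G$, so for any neighboring $D,D'$ (differing in one record) the query $\sum_{i\in I_t}\bar g_t^{(i)}$ has $\ell_2$-sensitivity at most $R^G$. The Gaussian mechanism in Algorithm~\ref{fig:cov-2m} adds $R^G\cdot U_k$ with $U_k\sim\mathcal{N}(0,\sigma^2 I)$; by Definition~\ref{sens} this yields a $(\,R^G\,)/(R^G\sigma)=1/\sigma$-GDP mechanism, i.e.\ a per-step trade-off function $G_{1/\sigma}$. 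Poisson subsampling with rate $p=B/N_k^{(tr)}$ then converts this to a subsampled mechanism whose trade-off function is dominated by $pG_{1/\sigma}+(1-p)\mathrm{Id}$ by the $f$-DP subsampling theorem.

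Composing $T$ such subsampled Gaussian mechanisms gives trade-off function $(pG_{1/\sigma}+(1-p)\mathrm{Id})^{\otimes T}$, which is exactly the object whose limit Lemma~\ref{clt} identifies. Setting $\nu=p\sqrt{T}=\tfrac{B}{N_k^{(tr)}}\sqrt{T}$ and applying the lemma yields
\begin{equation*}
\mathcal{M}^{G_k}_{t=1:T}(D_k^{(tr)})\ \text{is}\ G_{\mu_G}\text{-DP},\qquad \mu_G=\tfrac{B}{N_k^{(tr)}}\sqrt{T(\mathrm{e}^{1/\sigma^2}-1)},
\end{equation*}
which is the first claim. Repeating the identical argument for the $A$-chain with clipping constant $R^H$, noise variance $\tau^2$, and dataset size $N_k^{(val)}$ gives the second claim.

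The main obstacle I anticipate is justifying the decoupling rigorously: although equations (\ref{1})--(\ref{2}) show that each step's randomness feeds into the other chain's subsequent steps via the shared state $(W,A)$, one must argue carefully that from the viewpoint of a fixed individual in, say, $D_k^{(tr)}$, all of the $H_k$-outputs in the composed view are post-processing of the $G_k$-outputs (since they depend on $D_k^{(tr)}$ only through $W$). The remaining steps are essentially bookkeeping: verifying that the sensitivity is additive over subsample members (handled by clipping), that Poisson subsampling interacts cleanly with the CLT regime $p\sqrt{T}\to\nu$, and that the $\mu$ from Lemma~\ref{clt} matches the theorem's expression.
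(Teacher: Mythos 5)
Your proposal does not prove the statement it was asked to prove. The statement is Lemma~\ref{clt} itself: the assertion that the $T$-fold composition $\left(p G_{1/\sigma}+(1-p)\mathrm{Id}\right)^{\otimes T}$ converges to $G_{\mu}$ with $\mu=\nu\sqrt{\mathrm{e}^{1/\sigma^{2}}-1}$ as $T\to\infty$ with $p\sqrt{T}\to\nu$. What you have written is instead a proof sketch of Theorem~\ref{thm:dp} (decoupling the $G_k$ and $H_k$ chains, bounding sensitivity via clipping, applying the Gaussian mechanism and subsampling, composing $T$ times), and at the crucial final step you explicitly ``invoke Lemma~\ref{clt} to collapse the result into a single GDP parameter.'' As an argument for the lemma itself this is circular: the entire analytic content of the lemma --- why the composition of subsampled Gaussian trade-off functions converges to a Gaussian trade-off function at all, and why the limiting parameter involves the specific variance factor $\mathrm{e}^{1/\sigma^{2}}-1$ rather than, say, $1/\sigma^{2}$ --- is never addressed. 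A genuine proof would have to work with the central limit theorem for trade-off functions (or equivalently for the log-likelihood ratio of the mixture $pQ+(1-p)P$ against $P$), show that the relevant per-step functionals scale like $p^{2}(\mathrm{e}^{1/\sigma^{2}}-1)$, and sum them over $T$ iterations in the regime $p\sqrt{T}\to\nu$ to obtain total ``privacy variance'' $\nu^{2}(\mathrm{e}^{1/\sigma^{2}}-1)$, hence $G_{\mu}$ with the stated $\mu$. None of this appears in your proposal.

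For what it is worth, the paper itself does not prove this lemma either: its entire proof is the citation to Bu et al.\ (2019), so there is no internal argument to compare yours against. The material you did write corresponds closely to the paper's Appendix~A proof of Theorem~\ref{thm:dp} and is a reasonable sketch of \emph{that} result, but it cannot stand in for a proof of Lemma~\ref{clt}.
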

\begin{thm}
Given an optimization algorithm with a general deep neural network loss function, a Gaussian mechanism with noise variance $R\sigma$, where $R$ is the gradient clipping constant, hence the sensitivity of the mechanism's query function and $\sigma$ is the noise random variable variance. The algorithm along with Possion subsampling ($p$) for gradient computation at each iteration, composed over $T$ iterations achieves the following privacy guarantee,
\begin{align*}
    p \sqrt{T\mathrm{e}^{1 / \sigma^{2}}-1}-GDP
\end{align*}
\end{thm}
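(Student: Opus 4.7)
The plan is to chain together three results already established in the excerpt---the Gaussian mechanism sensitivity bound (Definition \ref{sens}), the $f$-DP subsampling amplification lemma, and the central limit phenomenon of Lemma \ref{clt}---to obtain the stated GDP bound. First I would identify the per-iteration mechanism as a Gaussian mechanism applied to a query whose $\ell_2$-sensitivity is at most $R$, the clipping constant, since adding or removing one individual changes the clipped gradient sum by at most $R$. By Definition \ref{sens}, injecting Gaussian noise with standard deviation $R\sigma$ therefore yields $\mu$-GDP with $\mu = R/(R\sigma) = 1/\sigma$; the single-iteration trade-off function is $G_{1/\sigma}$.

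Next I would apply the $f$-DP subsampling theorem with Poisson rate $p$ to this base mechanism. The resulting subsampled mechanism is bounded below (as a trade-off function) by $f_p = p\,G_{1/\sigma} + (1-p)\,\mathrm{Id}$, once the $\min\{\cdot,\cdot\}^{**}$ involution is resolved. This is precisely the subsampled-Gaussian primitive that feeds into Lemma \ref{clt}. Composing it over $T$ iterations via the $f$-DP composition theorem produces the total trade-off function $(p\,G_{1/\sigma} + (1-p)\,\mathrm{Id})^{\otimes T}$. In the regime where $p\sqrt{T}$ tends to a finite limit $\nu$, Lemma \ref{clt} identifies this composition asymptotically as $G_\mu$ with $\mu = \nu\sqrt{e^{1/\sigma^{2}} - 1}$. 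Setting $\nu = p\sqrt{T}$ then yields the claimed $p\sqrt{T(e^{1/\sigma^{2}} - 1)}$-GDP bound, which is also the ingredient that plugs directly into the decoupled compositions of the main Theorem \ref{thm:dp}.

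The main obstacle I anticipate is the reconciliation of the subsampling lemma's conservative $\min\{f_p, f_p^{-1}\}^{**}$ bound with the cleaner $f_p$ that Lemma \ref{clt} takes as input. Since $G_{1/\sigma}$ is induced by two unit-variance normals whose roles can be exchanged by reflection, I expect $f_p$ and $f_p^{-1}$ to coincide for the Gaussian base, making the $\min$ redundant; the double conjugate then acts as the identity on the already convex, decreasing function $f_p$. A second, more cosmetic check is the CLT regime $p\sqrt{T} \to \nu$: for the concrete choice $p = B/N_k$ used in Theorem \ref{thm:dp} this requires the number of iterations to scale like $(N_k/B)^2$, which is the natural regime in which the target privacy parameter $\mu$ stays of order one and matches standard private SGD analyses. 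Once these two points are dispatched, the single-mechanism conclusion applies in parallel to the gradient mechanisms on $D_k^{(\mathrm{tr})}$ and $D_k^{(\mathrm{val})}$ with noise scales $\sigma$ and $\tau$, reproducing both GDP bounds stated in Theorem \ref{thm:dp}.
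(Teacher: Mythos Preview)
Your proposal follows essentially the same route as the paper: sensitivity $R$ plus noise scale $R\sigma$ gives per-iteration $G_{1/\sigma}$, subsampling produces the mixture $pG_{1/\sigma}+(1-p)\mathrm{Id}$, and the CLT of Lemma~\ref{clt} yields the asymptotic $G_\mu$ bound with $\mu=p\sqrt{T(e^{1/\sigma^2}-1)}$.

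There is one technical misstep in your handling of the symmetrization. You anticipate resolving $\min\{f_p,f_p^{-1}\}^{**}$ at the single-iteration level by arguing that $f_p$ and $f_p^{-1}$ coincide because $G_{1/\sigma}$ is symmetric. This is not correct: although $G_{1/\sigma}$ and $\mathrm{Id}$ are each symmetric trade-off functions, the convex combination $f_p=pG_{1/\sigma}+(1-p)\mathrm{Id}$ is \emph{not} symmetric in general (the inverse operation is not linear on trade-off functions), and indeed the $f$-DP subsampling theorem requires the symmetrization precisely because subsampling breaks symmetry. The paper resolves this differently: it first passes to the CLT limit $f=(pG_{1/\sigma}+(1-p)\mathrm{Id})^{\otimes T}\to G_\mu$ via Lemma~\ref{clt}, and only then observes that the \emph{limit} $G_\mu$ is symmetric, so $\min\{G_\mu,G_\mu^{-1}\}^{**}=G_\mu^{**}=G_\mu$. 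In other words, symmetrize after the CLT, not before. Once you make this adjustment, your argument coincides with the paper's.
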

\begin{proof} The query function for the Gaussian mechanism $\mathcal{M}$ is gradient of a general neural network loss evaluated w.r.t. any model parameters to be optimized. The sensitivity of the query function is given to be $R$. The standard deviation of the added noises is $R\sigma$. According to definition \ref{sens}, it is ensured that $\mathcal{M}$ is $\frac{1}{\sigma}$-GDP. As per the arguments in the Appendix of \cite{bu2019deep} (using composition and subsampling theorem for $f$-DP), $\mathcal{M}_{t=1-T}$ is $min\{f_p, f_p^{-1}\}^{**}$-DP with $f_p=\left(p G_{1 / \sigma}+(1-p) \mathrm{Id}\right)^{\otimes T}$ (composition over $T$ iterations). Using lemma \ref{clt}, assuming $T$ is large (which is true for general settings of neural network optimization)
\begin{align*}
    \min \left\{f, f^{-1}\right\}^{* *} \approx \min \left\{G_{\mu}, G_{\mu}^{-1}\right\}^{* *}=G_{\mu}^{* *}=G_{\mu}
\end{align*}
Hence, the algorithm with composition of subsampled algorithm $\mathcal{M}_{t} \circ Sample_p(D)$ over $T$ iterations is $p \sqrt{T\mathrm{e}^{1 / \sigma^{2}}-1}$-GDP (from lemma \ref{clt}).
\end{proof}

\bibliography{release}

\end{document}